\DeclarePairedDelimiter{\ceil}{\lceil}{\rceil}
\title{Uniform Scattering of Robots on Alternate Nodes of a Grid}
\author {Moumita Mondal$^1$, Sruti Gan Chaudhuri$^1$, Punyasha Chatterjee$^1$}
\institute{$^1$ Jadavpur University, Kolkata, India}
\begin{document}

\maketitle
\begin{abstract}
  In this paper, we propose a distributed algorithm to uniformly scatter the robots along a grid, with robots on alternate nodes of this grid distribution. These homogeneous, autonomous mobile robots place themselves equidistant apart on the grid, which can be required for guarding or covering a geographical area by the robots. The robots operate by executing cycles of the states "look-compute-move". In the look phase, it looks to see the position of the other robots; in the compute phase, it computes a destination to move to; and then in the move phase, it moves to that computed destination. They do not interact by message passing and can recollect neither the past actions nor the looked data from the previous cycle, i.e., oblivious. The robots are semi-synchronous, anonymous and have unlimited visibility. Eventually, the robots uniformly distribute themselves on alternate nodes of a grid, leaving the adjacent nodes of the grid vacant. The algorithm presented also assures no collision or deadlock among the robots.
\end{abstract}

\keywords{Scattering in Grid, Autonomous, Mobile Robots, Oblivious}

\section{Introduction}
 A costly big complex robot may be replaced by a group of tiny autonomous robots also known as  {\it swarm robots} working collaboratively. To study on the collective behaviour of a group of tiny autonomous robots is an important area of research in the field of robotics. The swarm robots are basically mobile autonomous programmable particles. The goals of this autonomous mobile robot system have been patrolling, sensing, and exploring in a harsh environment like disaster-prone area, under the deep ocean and in the space with minimal human intervention \cite{Ref27}. The interest on distributed or decentralized control of multiple robots with constrained capabilities is a popular field of research for a long time. Theoretical representation of mobile robots in the Euclidean space have attracted researchers. The fundamental task for executing a job in collaboration is to form geometric shapes on the plane by the robots' positions. This pattern formation job has significantly supported in various fields, such as operations in hazardous environments, space missions, military operations, tumour excision, etc \cite{Ref7}. In this paper we address one of such pattern formation tasks, scattering in alternate nodes of a grid. The robots position themselves equidistant apart forming a grid in finite time. Each robot can sense its immediate surrounding or nearby robots, operates on the sensed data to compute the location to move to, and move to the computed destination. A significant application of the algorithm on scattering in grid, during Covid-19 pandemic, maybe in covering a region with autonomous robots, having UV ray emitting capabilities to disinfect the area \cite{Ref24} \cite{Ref25}. The robots freely move on a 2D plane. They are anonymous, homogeneous, oblivious and can not communicate but interact by observing others positions. Based on this model, we study the problem of scattering on alternate nodes of a grid by oblivious robots.

\subsection{Framework}
We consider a group of robots moving on a 2D plane. The robots are endowed with (a) Motorial capabilities i.e., they move independently in the Euclidean space and (b) Sensorial capabilities i.e., they look at the locations of the other robots. The robots have no means of communication through explicit message passing. However, the robots coordinate among themselves by observing the positions of the other robots on the plane. A robot is always able to look another robot within its visibility or sensing range (limited or unlimited). The robots are {\it homogeneous} (all executes the same algorithm) and {\it anonymous} (no unique identifiers). The {\it autonomy} of the robot system enables the robots to work without centralized control. The robots are assumed to be free of any kind computational and structural fault. The robots considered are point robots and are oblivious, i.e. they do not save  any information for the future.
The robots execute a cycle of the following phases:
\begin{itemize}
\item {\bf look} - Robots collect the positions of other robots within its sensing range.
\item {\bf Compute} - In the compute state, the robots execute our algorithm for computing  destinations. This algorithm is the same for all robots.
\item {\bf Move} - The robots move to the computed destinations in this state.
\end{itemize}
In general, the robots may or may not be synchronized.
\begin{itemize}
\item The {\bf asynchronous (ASYNC)} model is a pragmatic model, where the actions of the robots are totally independent. When a robot completes its computation, many of the other robots may have moved from their positions, based on which the computation is done. The asynchronous (ASYNC) scheduler activates the robots independently, and the duration of each Compute, Move and the time between successive activities is finite and unpredictable. As a result, a robot can be seen while moving and the snapshot and its actual configuration are not the same, and so its computation may be done with the old configuration.
\item {\bf semi-synchronous (SSYNC)} model, confirms that when a robot is moving no other robot is sensing. In this scheduling an arbitrary group of robots execute their cycles together and there is a common clock but, at each cycle, all robots may or may not be active. The semi-synchronous (SSYNC) scheduler activates a subset of all robots synchronously and their Look-Compute-Move cycles are performed at the same time. Here, we can assume that activated robots at the same time obtain the same snapshot, and their Compute and Move are executed instantaneously. In SSYNC, we can assume that each activation defines a discrete time called round and Look-Compute-Move is performed instantaneously in one round. A subset of activated robots in each round is determined by an adversary and robots do not know about this subset. \cite{Ref36}
\item In {\bf fully-synchronous (FSYNC)} model, all the robots operate their cycles at the same time. Thus all robots get the same view and they compute on the same data. It can be considered as a special case of SSYNC, when all robots are activated in each round.
\end{itemize} 
In this paper, we consider the semi-synchronous (SSYNC) model. 
We propose a robot's movement strategy such that, after a finite time they are placed equidistant apart on alternate nodes of a grid, leaving the adjacent nodes empty on the grid.

\section{Earlier Works and Our Contribution}

A large body of research work exist in the context of multiple autonomous mobile robots, exhibiting cooperative behaviour. The primary objective of such research work is to study the issues of group architecture, resource conflict, origin of cooperation, learning and geometric problems \cite{Ref19}. Traditionally the research on mobile robots involves artificial intelligence in which most of the results are based on experimental study or simulations. Recently, an emerging field of autonomous mobile robots looks at the robots as distributed mobile entities or programmable particle and investigate several coordination problems for them. They proceed to solve them deterministically providing proof of correctness of the algorithms.

The computational model for robots popular in the literature under this field is called a {\it weak model} \cite{Ref20}. Here, the robots execute repeated cycles consisting of phases, look-compute-move. The robots do not communicate through any wired or wireless medium. The robots may execute the cycle synchronously or semi-synchronously or asynchronously.

Scattering of mobile robots on a plane is one of the most popular problems. In this paper, the robot disperse on alternate nodes of a given infinite grid. The robots are initially on distinct arbitrary nodes of the grid and they can move along the edges of the grid to reach another node. The goal is to reach a state of static equilibrium in which they are uniformly scattered on alternate nodes of the grid. This uniform scattering (or covering, or self-deployment) problem occurs in practice, when robots are randomly deployed in a region but the requirement is for the region to be covered uniformly with maximizing coverage. A scattering algorithm specifies which operation must be performed by a robot whenever it is active, to achieve the given goal. All the robots perform the same self-deployment algorithm. Finally, the robots will reach a state of static equilibrium, and scattering will be completed within finite time. 
The subject of efficient uniform scattering is of extensive research in several fields (e.g. \cite{Ref2}, \cite{Ref3}, \cite{Ref5}, \cite{Ref11}, \cite{Ref18}, \cite{Ref7},\cite{Ref8}, \cite{Ref9}, \cite{Ref10}). Depending on the assumptions they make, the existing protocols differ greatly from each other.
Some of the major differences are based on the following:
\begin{itemize}
\item Nature of the environment – The robots can move on  a Euclidian plane, called continuous (e.g., \cite{Ref5}, \cite{Ref18}), or in a network or a graph, usually called discrete or graph world (e.g., \cite{Ref1}, \cite{Ref9}, \cite{Ref11}).

\item The robots may (e.g., \cite{Ref2}, \cite{Ref11}, \cite{Ref18}) or may not be synchronized (e.g., \cite{Ref5})

\item Memory of the robots- The robots can have a persistent memory (e.g., \cite{Ref2}, \cite{Ref7}, \cite{Ref18}) or oblivious (e.g., \cite{Ref5}, \cite{Ref10});

\item Visibility or sensing range of the robots may be limited (e.g., \cite{Ref7}, \cite{Ref11}) or unlimited, i.e., extends to the entire region (e.g., \cite{Ref5});

\item Computational power - the robots  have  the computational power of  Turing machines (e.g., \cite{Ref12}, \cite{Ref18}) or are simple Finite State machines (e.g., \cite{Ref1}, \cite{Ref11});

\item  Nature of termination of protocol - exact or approximate uniform covering is reached within finite time (e.g., \cite{Ref5}, \cite{Ref11}) or the protocol converges without ever terminating (e.g., \cite{Ref2}, \cite{Ref3});

\item Type of protocol (generic/specific) - the protocol is generic, i.e., it operates in any space/network (e.g., \cite{Ref5}, \cite{Ref12}, \cite{Ref18}) or only in specific for classes of regions/graphs (e.g., \cite{Ref2}, \cite{Ref3})

\end{itemize}

 Barriere et al., \cite{Ref22} discussed the uniform scattering problem for a set of autonomous mobile robots deployed in a grid network. In this paper, robots with a constant memory (non-oblivious) and limited visibility are considered. The dispersion problem of mobile robots on graphs is also discussed by Ajay D. Kshemkalyani et al., \cite{Ref23} where the robots are initially placed arbitrarily on the nodes of an n-node anonymous graph and they autonomously reposition themselves to reach a configuration in which each robot is on a distinct node of the graph. Also in \cite{Ref29} \cite{Ref30} \cite{Ref31} \cite{Ref34} the dispersion problem is discussed.

\section{Our Contribution}

In this paper, we prove that starting from any arbitrary initial configuration of robots on a given infinite grid, the robots can uniformly distribute themselves along a grid, with robots on the alternate nodes of this grid distribution. This problem is of significantly important due to its relationship to many other fundamental robot coordination problems, such as exploration, scattering, load balancing, relocation of self-driven electric cars (robots) to recharge stations (nodes), etc. The protocol is fully localized and decentralized, and it makes minimal assumptions like; it does not require any direct or explicit communication between robots; the robots have no past memory; the robots are anonymous, semi-synchronous and identical. We propose a distributed algorithm to disperse unlimited visibility robots in alternate nodes of a grid. We show that if the robots agree only on the direction of both axes (i.e., X-axis and Y-axis), then they can form a uniform distribution along a grid, with alternate nodes empty, without encountering any collision or deadlock.

\section{Algorithm}
A set of $n$ stationary points on an infinite grid on a 2D plane is given. The set of robots, R is scattered on various nodes of the grid. The robots move along the edges of the grid, from one grid node to another in such a way that after a finite number of cycle execution they are placed uniformly, equidistantly apart on alternate nodes of a grid.

\subsection{Underlying Model}
Let R = $\{r_{1}, r_{2},..,r_{n}\}$ be a set of autonomous mobile robots. The set of robots R deployed on the given grid is described as follows:
\begin{itemize}
    \item The robots are autonomous.
    \item Robots are anonymous and homogeneous.
    \item The robots are oblivious in the sense that they can not recollect any data from the past cycle.
    \item Robots can not communicate explicitly. Each robot is allowed to have   a camera that can take pictures over 360 degrees. The robots communicate only by means of observing other robots with the camera.
    \item The robots are point robots.
     \item The robots have unlimited visibility.
     \item The robots execute look-compute-move under semi-synchronous schedules. In this scheduling, a set of robots execute the cycles synchronously. However, this set is chosen randomly. This scheduling gives the assurance that a robot will not capture the locations of moving robots.
     \item A robot considers its position as origin (i.e its local co-ordinate system). The robots do not have any global origin. However, they agree on the direction and orientation of X and Y axes.
     \item The robots reside in nodes and do not stop in edges. During the movement to the computed destination node, they do not stop in between nodes. This motion is also know is rigid motion.
\end{itemize}

\subsection{Overview of the Problem}
Our objective is to uniformly distribute the autonomous mobile robots, along a grid, such that the robots are on alternate nodes of this grid distribution. Initially all the robots are on distinct nodes. 
Following are the steps to be executed by each robot in the compute phase: 
\begin{itemize}
\item Determine the maximum number of rows and columns required to uniformly distribute the robots along a grid such that the robots are on alternate nodes of this grid distribution, using the {\it FindDimension} routine;
\item Compute the north-most bound, (using the {\it FindYMAX} routine) and west-most bound (using the {\it FindXMIN} routine) of the uniform robot distribution along the grid;
\item The robots uniformly distribute themselves along the grid, leaving the alternate nodes empty, using the {\it FormGrid} routine.
\end{itemize}

\subsection{Description of Algorithm FormGrid}
Let the number of the robots in R be $n$ and a robot, $r_{i} \in R$. A grid node, $T_{x,y}$ has coordinate $(x,y)$. Let $grid_{final}$ be the uniform distribution of robots to be formed along the grid, with robots on alternate nodes of this grid distribution and $row_{j}$ be the $j^{th}$ row of this $grid_{final}$. So, the number of robots, in the alternate rows or columns of $grid_{final}$, will be $\ceil*{\sqrt{n}}$ (say, $rc$), such that the last row(s) of the grid may be completely or partially empty. The maximum dimensions of the final uniform distribution, $grid_{final}$, with robots on alternate nodes, is determined as $((rc*2)-1)$ (say, $d$), since the alternate rows and columns will be empty (Algorithm ~\ref{algo1}).

The Y-axis value of the extreme north robot ($Y_{MAX}$) is the northwards bound of $grid_{final}$ and this row is also considered the first row, $row_{1}$ (i.e. j=1) of the uniform distribution of robots. To determine the $Y_{MAX}$, a robot $r{i}$ compares the Y-axis values of all the robots. So, the robot with the maximum Y-axis value is the north-most robot and its Y-axis value is considered as the $Y_{MAX}$ (Algorithm ~\ref{algo2}). Similarly, the X-axis value of the extreme west robot ($X_{MIN}$) is the westwards bound of the $grid_{final}$. To determine the $X_{MIN}$ also, a robot $r{i}$ compares the X-axis values of all the robots. The robot with the minimum X-axis value is the west-most robot and its X-axis value is considered as the $X_{MIN}$ (Algorithm ~\ref{algo3}). Although the robots locally compute these values, they all agree to the same robot as north-most and thus have the same $Y_{MAX}$, as they agree on the direction and orientation of the axes. Similarly, the robots also agree to the same robot as west-most and thus have the same $X_{MIN}$.

The robots, $r_{i}$ compute their movement from their source node, $T_{x,y}$ to a destination node on the grid. In case of any tie, the robot trying to move westwards to its destination node will get highest priority, followed by eastwards movement and then southwards and northwards respectively.
Depending on the position of $r_{i}$ on the grid, there are 5 configurations, denoted as $\Psi_{1}$, $\Psi_{2}$, $\Psi_{3}$, $\Psi_{4}$ and $\Psi_{5}$.

\begin{itemize}
\item CASE $\Psi_{1}$: Robot, $r_{i}$ is in a row, $row_{j}$ such that it is among the first $d$ rows from $Y_{MAX}$ (i.e. $j<=d$) and it is an even row, (i.e. j is even).
In this case, $r_{i}$ tries to move northwards to an odd row. $r_{i}$ checks if $T_{x,y+1}$ is vacant and then moves there. Otherwise, if $T_{x,y+1}$ is not empty, $r_{i}$ moves a node eastwards $T_{x+1,y}$ and then checks if the immediate north node is vacant. If $T_{x+1,y}$ is also not vacant, $r_{i}$ waits for it to be vacant. This process continues until $r_{i}$ reaches the odd row northwards to it, i.e $row_{j-1}$.
Thus the robots, on the even rows, $row_{j}$ (where, j is even), of the first $d$ rows from the $Y_{MAX}$, move to the odd rows ($row_{j-1}$) above it. This continues till all $r_{i}$ of the first $d$ rows from the $Y_{MAX}$, are placed on a odd row. (Figure ~\ref{1}(i)) \\

\item CASE $\Psi_{2}$: Robot, $r_{i}$ is in a row, $row_{j}$ such that it is beyond the first $d$ rows from $Y_{MAX}$ (i.e. $j>d$).
In this case, $r_{i}$ tries to move northwards, aiming to reach the $d^{th}$ row of the $grid_{final}$. $r_{i}$ checks if immediate above node, $T_{x,y+1}$ is vacant and then moves there. Otherwise, if $T_{x,y+1}$ is not empty, $r_{i}$ moves eastwards to a vacant $T_{x+1,y}$ and then to the row northwards, $T_{x+1,y+1}$. If $T_{x+1,y}$ is also not vacant, $r_{i}$ waits for it to be vacant. This process continues until $r_{i}$ reaches the $d^{th}$ row from $Y_{MAX}$, i.e $row_{d}$. (Figure ~\ref{1}(ii))\\

\item CASE $\Psi_{3}$: All the $n$ robots are in alternate rows (i.e. odd rows) of the first $d$ rows starting from $Y_{MAX}$. Robot, $r_{i}$ placed in $row_{j}$, is not on alternate nodes of the $row_{j}$. Also, $r_{i}$ is not on the west bound of $grid_{final}$, $X_{MIN}$. Here, there can be two possibilities:
\begin{itemize}
\item Any robot on the west of $r_{i}$ in $row_{j}$, have a vacant target node to move westwards.
$r_{i}$ tries to move westwards in its row, till the westwards bound, $X_{min}$ is meet or there is no further vacant target node to move westwards. $r_{i}$ checks if westwards nodes, $T_{x-1,y}$ and $T_{x-2,y}$ are vacant, if so then $r_{i}$ moves westwards to $T_{x-1,y}$.
\item All robots on the west of $r_{i}$ in $row_{j}$, have no vacant target node to move westwards.
Now, $r_{i}$ tries to move eastwards, to position itself in alternate nodes of the row. If checks if the eastwards nodes, $T_{x+1,y}$ and $T_{x+2,y}$ are vacant, if so then $r_{i}$ moves eastwards to $T_{x+1,y}$, else it waits. 
\end{itemize}
This process continues until all the robots in $row_{j}$ are on alternate nodes.\\
Following this, all the robots are on alternate rows (i.e., odd rows only) and also on alternate nodes of the same row. (Figure ~\ref{2}, Figure ~\ref{3}, Figure ~\ref{4}(ii), Figure ~\ref{5}). However, after all the robots are placed on the alternate nodes, the rows may have greater or less than the $rc$ number of robots.\\

\item CASE $\Psi_{4}$: All robots are placed in alternate nodes of alternate rows. Robot, $r_{i}$ in $row_{j}$ is such that starting from the west bound, $X_{MIX}$, $r_{i}$ is in the first $rc$ number of robots in $row_{j}$, and the northwards alternate row (odd row) node, $T_{x,y+2}$ is empty. Also, $row_{j}$ is not the north-most row (i.e., $j \neq 1$).
In this case, $r_{i}$ moves to the northwards alternate row, $T_{x,y+2}$.(Figure ~\ref{4}(i))\\
 
\item CASE $\Psi_{5}$: Robot, $r_{i}$ has greater than or equal to $rc$ number of robots on its west in the same row. Here, there can be two possibilities:
\begin{itemize}
\item $r_{i}$ is in the north-most row, $row_{1}$. Here, $r_{i}$ tries to move to a southwards odd row. $r_{i}$ checks if its immediate alternate south row node, $T_{x,y-2}$ is vacant, if so, then moves southwards to node $T_{x,y-2}$, else waits. After $r_{i}$ moves to node $T_{x,y-2}$ in $row_{j+2}$, it then tries to move westwards, in case $row_{j+2}$ has less than $rc$ robots, as per, $\Psi_{3}$. Otherwise, $r_{i}$ continues to move to a further south odd row and when it encounters a $row_{j}$ with lesser then $rc$ robots, $r_{i}$ moves westwards to place itself in its target node as per, $\Psi_{3}$.
\item $r_{i}$ is not the north-most row (i.e., $j>1$). Here, the robots try to move to a northward or southward odd row to form the uniform distribution, $grid_{final}$. The robots that has $rc$ number of robots on their west on the same row, are the excess robots on that row. So, they move to the other odd rows that has lesser than $rc$ number of robots.
In this case, if odd rows above $row_{j}$ have less than $rc$ robots, and the northwards alternate row node, $T_{x,y+2}$ is vacant, then $r_{i}$ moves northwards to the alternate row node, $T_{x,y+2}$.
Then if $r_{i}$ has less than $rc$ number of robots on its west in $row_{j-2}$, it tries to place itself in that row, as per $\Psi_{3}$.
If $T_{x,y+2}$ is not vacant, $r_{i}$ waits till $T_{x,y+2}$ is empty.
Only when all odd rows above $row_{j}$ have greater than or equal to $rc$ robots and the southwards alternate row node, $T_{x,y-2}$ is vacant, then $r_{i}$ moves southwards to the alternate row node, $T_{x,y-2}$. 
Then if $r_{i}$ has less than $rc$ number of robots on its west in $row_{j+2}$, it tries to place itself in that row, as per $\Psi_{3}$. 
If $T_{x,y-2}$ is not vacant, $r_{i}$ waits till $T_{x,y-2}$ is empty.
\end{itemize}
This process continues until no row has greater than $rc$ robots. (Figure ~\ref{6}(i)). 
\end{itemize}
Thus the robots uniformly distribute themselves along a grid with robots on alternate nodes of this grid distribution, by computing their target nodes on the grid (Algorithm ~\ref{algo4}) (Figure ~\ref{6}(ii)). An example for the algorithm, with 8 arbitrarily placed robots on the nodes of an infinite grid is shown in Figures 1,2,3,4,5,6 respectively.

 \begin{figure}[H]
   \centering
   \includegraphics[scale=0.52]{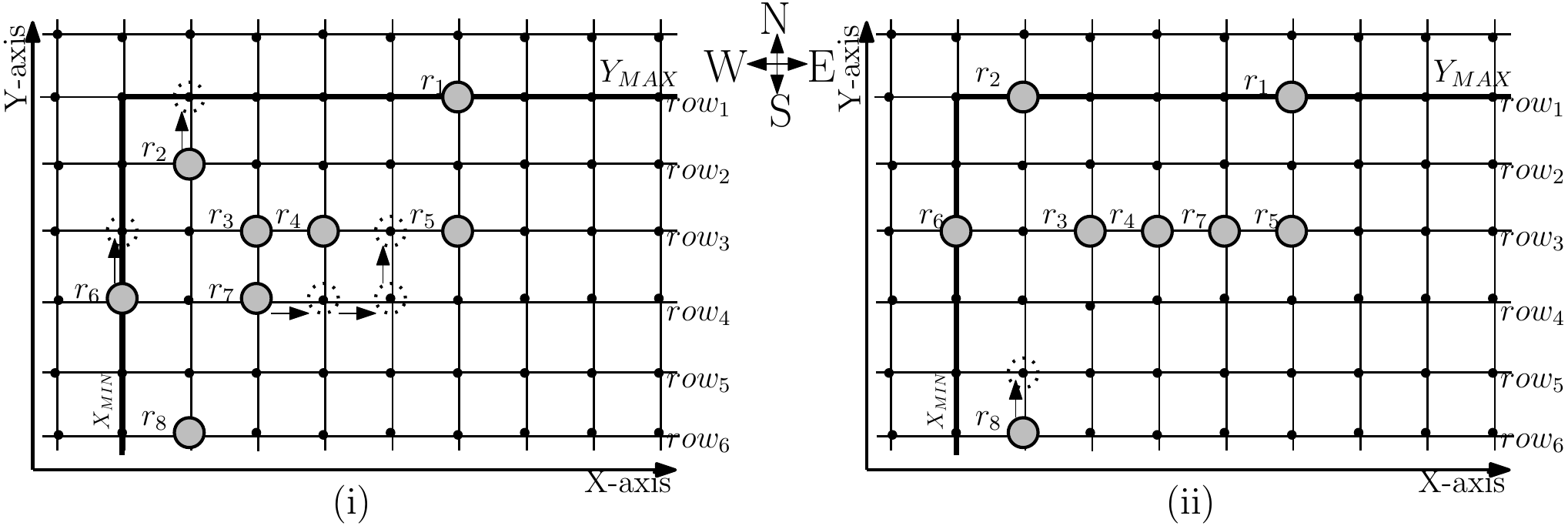}
   \caption{(i) The robots on the even rows move northwards to the odd rows of $grid_{final}$.(CASE $\Psi_{1}$)\\
   (ii) $r_{8}$ is beyond the first $d$ (i.e (3*2)-1=5) rows from $Y_{MAX}$, so it moves northwards to the $5^{th}$ row.(CASE $\Psi_{2}$)
  }
   \label{1}
  \end{figure}

\begin{figure}[H]
   \centering
   \includegraphics[scale=0.52]{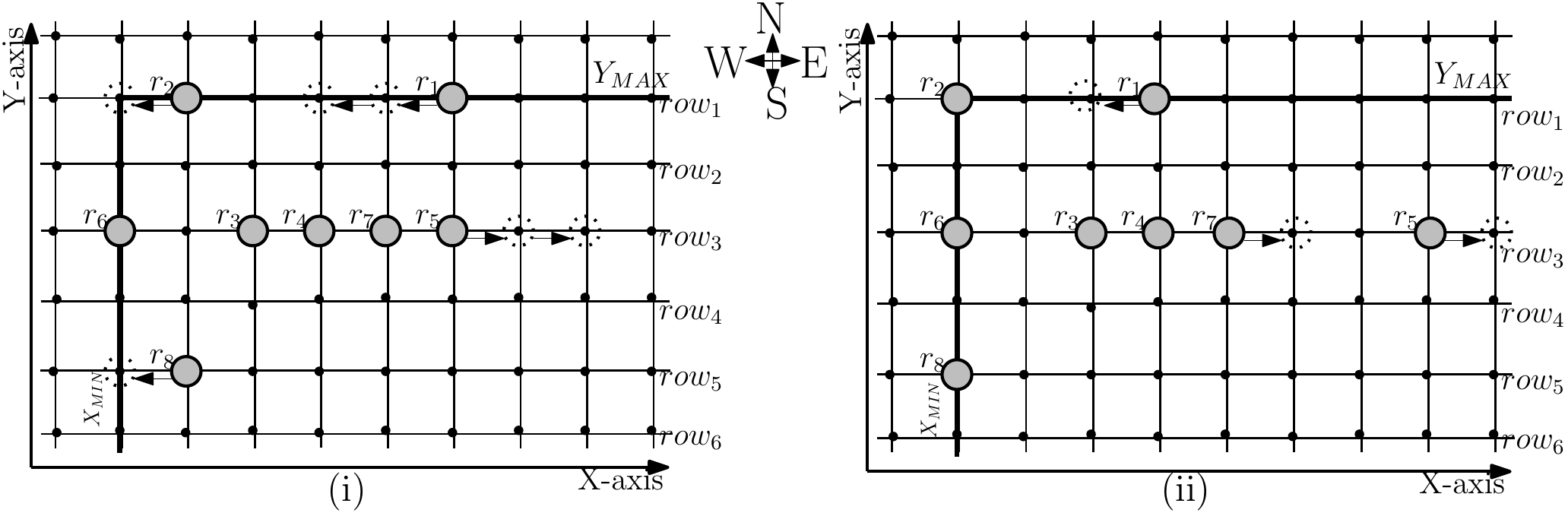}
   \caption{(i) and (ii)The robots move to their immediate west node on the same row when the two consecutive west nodes are vacant. Robots try to move eastwards only when there is no vacant target node to move west.(CASE $\Psi_{3}$)}
   \label{2}
  \end{figure}

  \begin{figure}[H]
   \centering
   \includegraphics[scale=0.52]{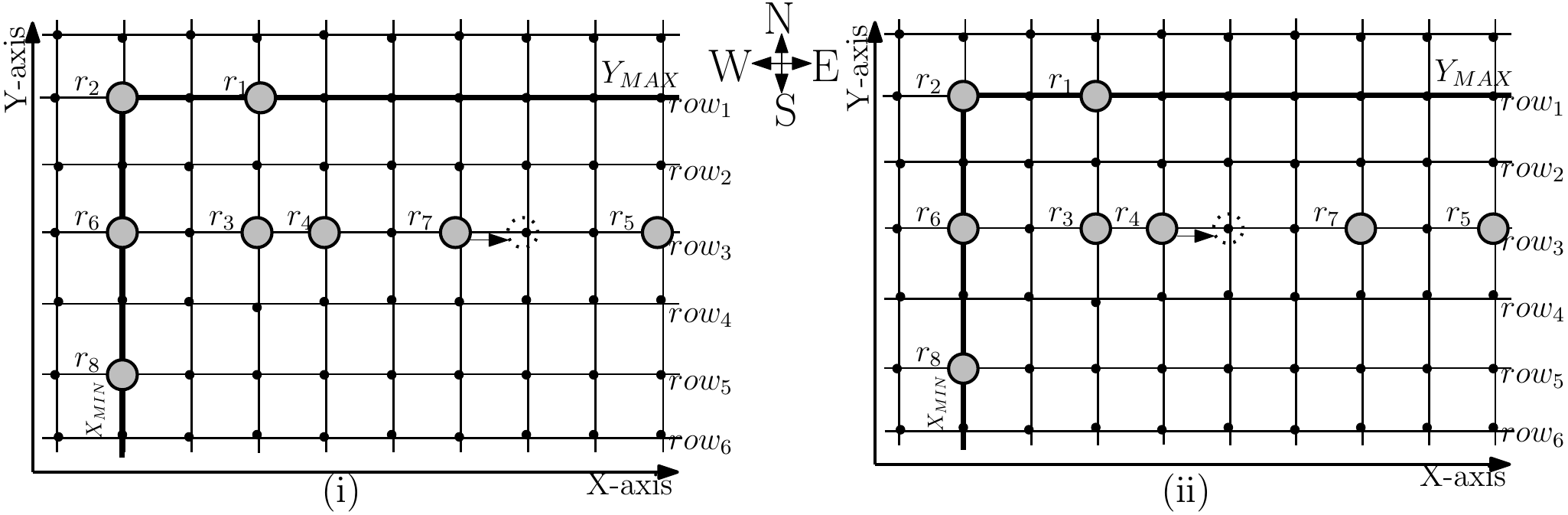}
   \caption{(i) and (ii) The robots move further to place themselves on alternate nodes of the same row.(CASE $\Psi_{3}$)}
   \label{3}
  \end{figure}

 \begin{figure}[H]
   \centering
   \includegraphics[scale=0.52]{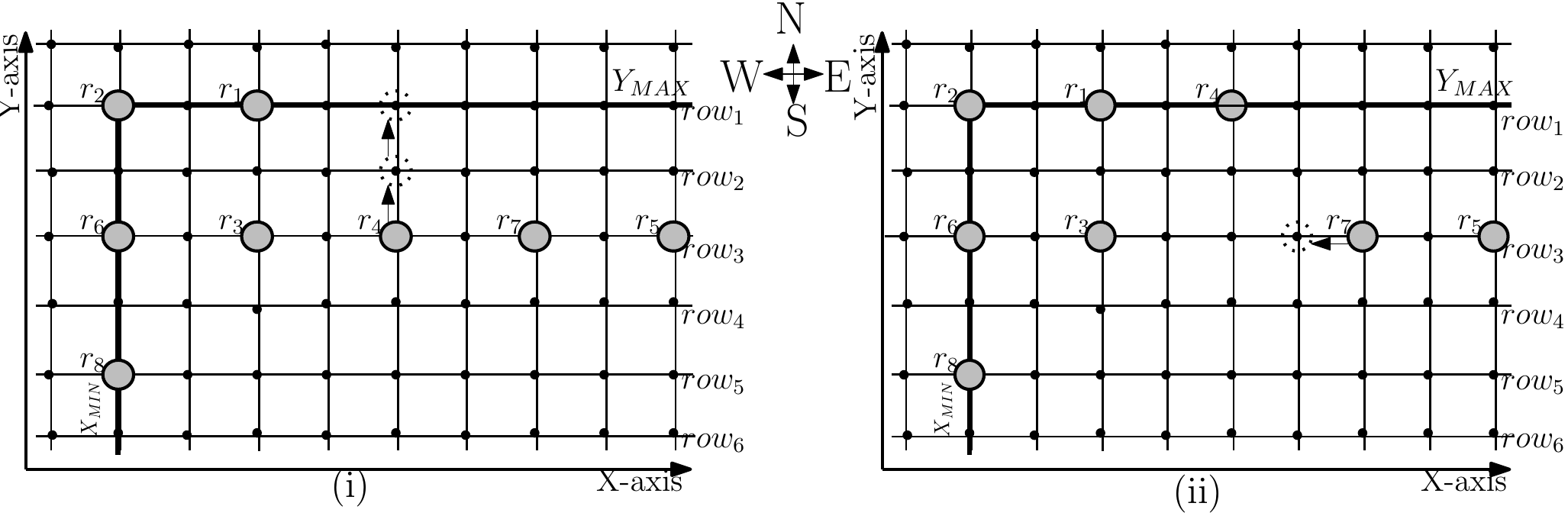}
   \caption{(i) Since all robots on the alternate rows are placed on the alternate nodes, $r_{4} \in$ first $rc$ robots of $row_{3}$, moves northwards to the alternate row, $row_{1}$, as it finds a vacant node.(CASE $\Psi_{4}$)\\
   (ii) Robot $r_{7}$ in $row_{3}$ again continue to hop west, since it again finds consecutive vacant nodes on its west.(CASE $\Psi_{3}$)}
   \label{4}
  \end{figure}

  \begin{figure}[H]
   \centering
   \includegraphics[scale=0.52]{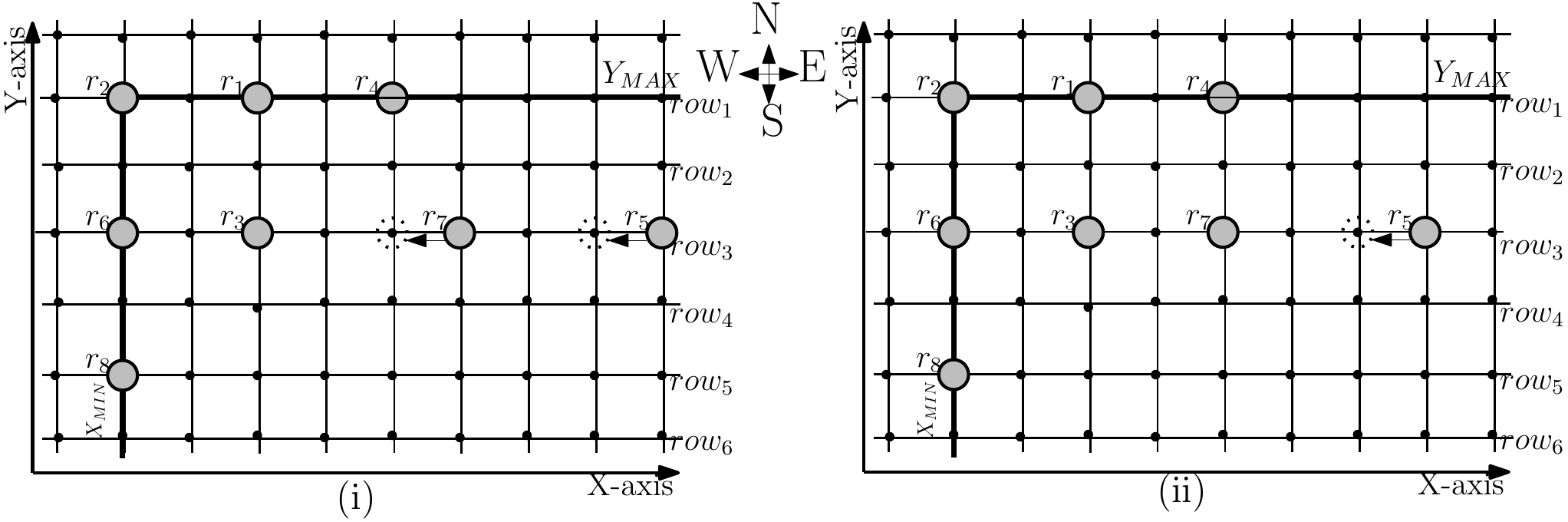}
   \caption{(i) Robot $r_{7}$ and then $r_{5}$ in $row_{3}$ continue to hop west, as they find two consecutive vacant nodes on their west.(CASE $\Psi_{3}$)\\
   (ii) Robot $r_{5}$ continue to move west to place itself on an alternate node of that row.(CASE $\Psi_{3}$)}
   \label{5}
  \end{figure}

\begin{figure}[H]
   \centering
   \includegraphics[scale=0.52]{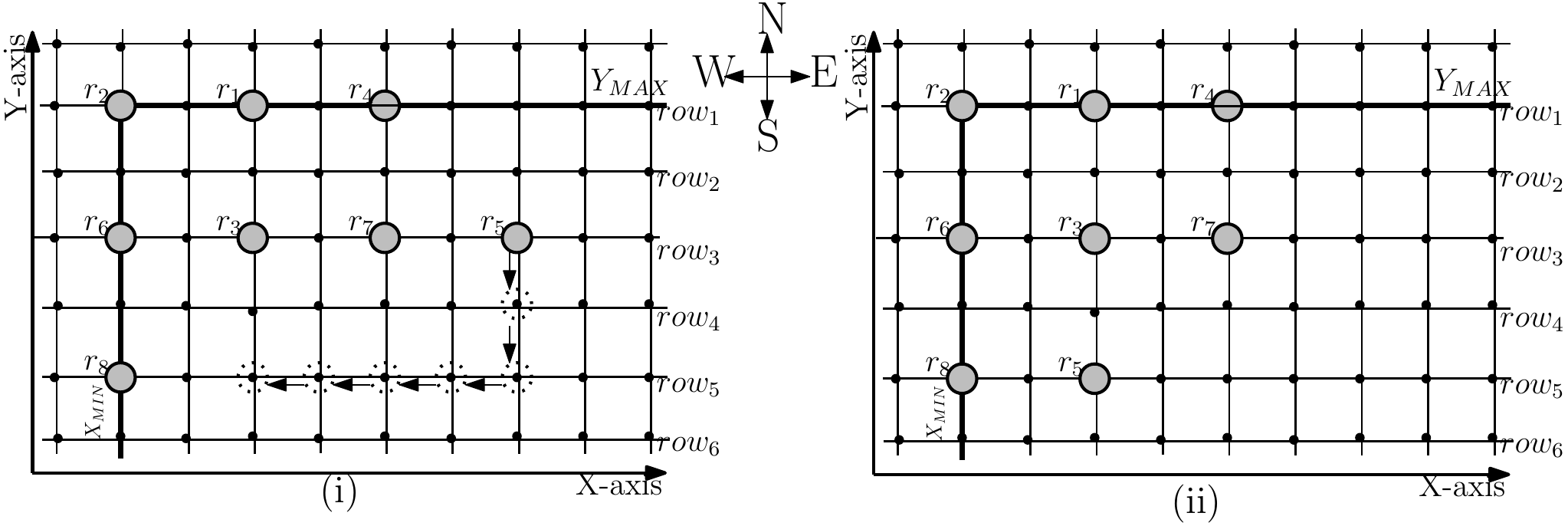}
   \caption{(i) Robot $r_{5}$ in $row_{3}$ has $rc$ robots on its west in $row_{3}$ and the above odd row $row_{1}$ has $rc$ robots.(CASE $\Psi_{5}$). So, $r_{5}$ moves southwards to alternate row, $row_{5}$ and then move westwards to place itself on an alternate node of the grid distribution.\\
   (ii)An uniform distribution along the grid ($grid_{final}$) is formed, with robots on alternate nodes of this grid distribution.}
   \label{6}
  \end{figure}

\begin{algorithm} [H]
\caption{FindDimension($n$)}
\label{algo1}
\KwIn{$n$}
\KwOut{Maximum dimensions of the uniform distribution of robots, to be formed} 
Let $grid_{final}$ be uniform distribution of robots to be formed, with robots on alternate nodes of this grid distribution;\\ $rc$ be the maximum number of robots, in a row or column of $grid_{final}$; and \\
$d$ be the maximum number of rows or columns of $grid_{final}$;\\
$rc \leftarrow \ceil*{\sqrt{n}}$; \\
$d \leftarrow (rc*2)-1$;\\
\Return $rc,d$
\end{algorithm}

\begin{algorithm} [H]
\caption{FindYMAX($n$)}
\label{algo2}
\KwIn{$n$}
\KwOut{The north-most bound of the distribution, $Y_{MAX}$}
$r{i} \in R$ considers its position as origin (0,0). \\
Let $Y_{MAX}$ be the Y-axis value of the north-most robot in the grid and \\ 
Initially let, $Y_{MAX}\leftarrow$ Y-axis value of $r_{1}$; \\ 
Let c be the robot counter to compare all robots and $c\leftarrow2$;\\
\While {$c<=n$}
{
    \If {Y-axis value of $r_{c} > Y_{MAX}$}
    {
        $Y_{MAX}\leftarrow$ Y-axis value of $r_{c}$;
    }
    $c\leftarrow c+1$;
}
\Return $Y_{MAX}$
\end{algorithm}

\begin{algorithm} [H]
\caption{FindXMIN($n$)}
\label{algo3}
\KwIn{$n$}
\KwOut{The west-most bound of the distribution, $X_{MIN}$}
$r{i} \in R$ considers its position as origin (0,0). \\
Let $X_{MIN}$ be the X-axis value of the west-most robot and \\
Initially let, $X_{MIN}\leftarrow$ X-axis value of $r_{1}$; \\ 
Let c be the robot counter to compare all robots and $c\leftarrow2$;\\
\While {$c<=n$}
{
    \If {X-axis value of $r_{c} < X_{MIN}$}
    {
        $X_{MIN}\leftarrow$ X-axis value of $r_{c}$;
    }
    $c\leftarrow c+1$;
}
\Return $X_{MIN}$
\end{algorithm}

\begin{algorithm} [H]
\caption{FormGrid($n$)}
\label{algo4}
\KwIn{$n$}
\KwOut{Robot $R$ reaches its target point on the grid}
Let $r_{i} \in$ R and $T(x,y)$ be a grid node with coordinate $(x,y)$; \\
Initially, consider $r_{i}$ is at $T(x,y)$ (where x,y=0). Let the final target point of $r_{i}$ be $T_{i}$\\
Dimensions of the uniform distribution to be formed ($grid_{final}$) $\leftarrow$ {\it FindDimendion};\\
$Y_{MAX} \leftarrow$ {\it FindYMAX}; and
$X_{MIN} \leftarrow$ {\it FindXMIN};\\
$j$ is the row number of $row_{j}$ starting from the north bound, $Y_{MAX}$\\
Priority of movement in case of any tie in $r_{i}$ movement $\leftarrow$ $r_{i}$ trying to move westwards will get highest priority, followed by eastwards, southwards and northwards respectively;

\If {$r_{i} \in row_{j}$ such that ($j<=d$ and $j \pmod 2 = 0$)}
{
    \While {$r_{i} \in row_{j}$}
    {
        \If {$T_{x,y+1}$ is empty} 
        {
            $r_{i}$ moves to $T_{x,y+1}$; \\
            j=j-1;
        }
        \ElseIf {$T_{x+1,y}$ is empty}
        {
           $r_{i}$ moves to $T_{x+1,y}$;
        }
    }
}

\While {$r_{i} \in row_{j}$ such that $j>d$}
{
    \If {$T_{x,y+1}$ is empty}
    {
        $r_{i}$ moves to $T_{x,y+1}$;  \\
        j=j-1;
    }
    \ElseIf {$T_{x+1,y}$ is empty}
    {
        $r_{i}$ moves to $T_{x+1,y}$;
    }
   
}
\end{algorithm}

\begin{algorithm}[H]         
\caption {FormGrid($n$) continued}

\While {$r_{i}$ $\notin$ $X_{MIN}$ and $\forall$ $r_{i}$ $\in$ $row_{j}$ not on alternate nodes}
{
    \If {$T_{x-1,y}$ and $T_{x-2,y}$ are empty}
    {
        $r_{i}$ moves to $T_{x-1,y}$;
    }
    \ElseIf {all robots on the west of $r_{i}$ in $row_{j}$, have no vacant target node to move westwards}
    {
        \If {$T_{x+1,y}$ and $T_{x+2,y}$ are empty}
        {
            $r_{i}$ moves to $T_{x+1,y}$;
        }
    }
    
}


\While {$\forall r_{i}$ placed in alternate nodes of alternate rows and $r_{i} \in row_{j}$ such that $r_{i} \in$ first $rc$ robots of $row_{j}$ and $j \neq 1$ and $T_{x,y+2}$ is empty}
{
    $r_{i}$ moves to $T_{x,y+2}$;\\
    j=j-2;
}
\If {$r_{i} \in row_{j}$ such that, $j=1$ and $r_{i}$ has $>= rc$ number of robots on its west in $row_{j}$}
{
        \While {$T_{x,y-2}$ is empty and $r_{i}$ $\notin$ its target point $T_{i}$}
        {    
            $r_{i}$ moves to $T_{x,y-2}$;  \\
            j=j+2;\\
            \If {$r_{i}$ has $< rc$ number of robots on its west in $row_{j}$}
            {      
                \While {($T_{x-1,y}$ and $T_{x-2,y}$ are empty) and $r_{i}$ not on $X_{MIN}$}
                {
                    $r_{i}$ moves to $T_{x-1,y}$;
                }
                $r_{i}$ placed in target point $T_{i}$;
            }
        }
       
}

\If {$r_{i} \in row_{j}$ such that, $j>1$ and $r_{i}$ has $>= rc$ robots on its west in $row_{j}$}
{
        \If {odd rows above $row_{j}$ have $< rc$ robots}
        {
            \While {$T_{x,y+2}$ is empty and $r_{i}$ $\notin$ its target point $T_{i}$}
            {
                $r_{i}$ moves to $T_{x,y+2}$; \\
                j=j-2;\\
                \If {$r_{i}$ has $< rc$ number of robots on its west in $row_{j}$}
                {
                    \While {($T_{x-1,y}$ and $T_{x-2,y}$ are empty) and $r_{i}$ not on $X_{MIN}$}
                    {  
                        $r_{i}$ moves to $T_{x-1,y}$;
                    }
                    $r_{i}$ placed in its target point $T_{i}$;
                }
            }
            
        }
        \ElseIf {odd rows above $row_{j}$ have $>= rc$ robots}
        {
            \While {$T_{x,y-2}$ is empty and $r_{i}$ $\notin$ its target point $T_{i}$}
            {
                $r_{i}$ moves to $T_{x,y-2}$; \\
                j=j+2;\\
                \If {$r_{i}$ has $< rc$ robots on its west in $row_{j}$}
                {
                    \While {($T_{x-1,y}$ and $T_{x-2,y}$ are empty) and $r_{i}$ not on $X_{MIN}$}
                    {
                        $r_{i}$ moves to $T_{x-1,y}$;
                    }
                    $r_{i}$ placed in its target point $T_{i}$;
                }
            }
        }
}
\end{algorithm}

\section{Correctness}

\begin{lemma}
The robots do not move back to their previous positions.
\end{lemma}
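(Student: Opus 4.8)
The plan is to track, for each robot $r_i$, the sequence of grid nodes it occupies over the whole execution and to show this sequence never repeats a node. I would exploit the fact that \emph{FormGrid} proceeds in three directional phases --- the vertical consolidation phase ($\Psi_1,\Psi_2$), the in-row packing phase ($\Psi_3$), and the inter-row balancing phase ($\Psi_4,\Psi_5$) --- and argue monotonicity of an appropriate coordinate within each phase, together with the fact that a robot never re-enters an earlier phase. The global claim then follows by concatenating the per-phase segments.

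First I would handle $\Psi_1$ and $\Psi_2$. Every move here is either to $T_{x,y+1}$ (north) or to $T_{x+1,y}$ (east), so along this segment both the $x$- and the $y$-coordinate are non-decreasing and exactly one of them strictly increases at each step. Ordering nodes by the product order $(x,y)\preceq(x',y')\iff x\le x' \wedge y\le y'$, each move is strictly increasing, so no node can recur within this phase. I would then argue that the guard of $\Psi_1$ ($j$ even) and of $\Psi_2$ ($j>d$) can never become true again once the robot has reached an odd row inside the top $d$ rows, because the later cases $\Psi_3$--$\Psi_5$ move robots only along odd rows or between odd rows (all such moves shift $y$ by $0$ or $\pm 2$, preserving parity), keeping the robot on the odd rows of the consolidated region.

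Next, for $\Psi_3$ I would argue that within a fixed row a robot first performs all of its westward moves and only afterwards (if ever) performs eastward moves, never switching back. The key invariant is that a robot moves east to $T_{x+1,y}$ only when every robot to its west in $row_j$ is already blocked, and that --- using $X_{MIN}$ as an anchor and induction from the west --- such blocked, alternately-placed robots stay put; hence the saturation condition, once triggered, persists and the $x$-coordinate of $r_i$ is monotone (decreasing, then at most a single switch to increasing). I expect this to be the main obstacle: I must rule out the scenario in which a robot vacates a node by moving east and then, finding two empty nodes immediately to its new west, is drawn back onto a node it just left. Ruling this out requires showing that the ``two consecutive empty nodes to the west'' precondition cannot be re-created for a robot that has already entered eastward mode, which I would derive from the alternate-node target structure together with the west-to-east filling order anchored at $X_{MIN}$.

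Finally, for $\Psi_4$ and $\Psi_5$ I would use a global argument on row occupancies. A robot moves north (to $T_{x,y+2}$) only when some odd row above has fewer than $rc$ robots, and south (to $T_{x,y-2}$) only when all odd rows above already hold at least $rc$; these guards are mutually exclusive in any fixed configuration, and the per-row counts evolve monotonically toward the balanced target, since deficient rows only gain robots and saturated rows only shed their excess. Combining this monotone balancing with the stated tie-breaking priority (west $>$ east $>$ south $>$ north), I would show that a robot is never first sent to a row and then sent back to it, so its inter-row hops are vertically monotone; appending the $\Psi_3$-style westward settling that follows each hop, and invoking the monotonicity already established for the earlier phases, yields that the full trajectory visits each node at most once.
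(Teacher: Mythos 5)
Your decomposition into three directional phases is sensible, and your $\Psi_1$/$\Psi_2$ argument (each move strictly increases the position in the product order on $(x,y)$, and later cases preserve odd-row parity so these cases are never re-entered) is correct and already more rigorous than the paper's own proof, which consists only of the observations that motion is rigid and that $\Psi_4$/$\Psi_5$ hops go between odd rows, never onto an adjacent even row; the paper never analyzes in-row motion at all. However, the step you yourself flag as the main obstacle --- persistence of ``eastward mode'' in $\Psi_3$ --- is a genuine gap, and it cannot be closed, because the invariant you need is false for the algorithm as written.

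Concretely: take $n=5$ (so $rc=3$, $d=5$), with robots in $row_1$ at offsets $0,1,3$ from $X_{MIN}$ and robots in $row_3$ at offsets $0,2$. All robots are on odd rows, so only $\Psi_3$ governs the robot $r$ at offset $3$ ($\Psi_4$ needs all robots on alternate nodes, and $\Psi_5$ needs at least $rc=3$ robots to the west, but $r$ has only two). Now $r$ cannot move west, because its blocking robot sits at $T_{x-2,y}$ (offset $1$) while $T_{x-1,y}$ (offset $2$) is free; every robot west of $r$ is blocked; and offsets $4,5$ are free, so the eastward branch fires and $r$ moves to offset $4$. But precisely because the western block was at distance two rather than distance one, $r$ leaves \emph{both} offsets $2$ and $3$ vacant behind it, so at its next activation the higher-priority westward guard ``$T_{x-1,y}$ and $T_{x-2,y}$ empty'' holds and $r$ moves straight back to offset $3$ --- its previous position. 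A fair SSYNC adversary sustains this $3\rightleftarrows 4$ oscillation forever (activate the robot at offset $1$ only in rounds when $r$ sits at offset $3$, where that robot is blocked). So your key claim that the ``two consecutive empty nodes to the west'' precondition cannot be re-created for a robot in eastward mode fails exactly when the block is at $T_{x-2,y}$ with $T_{x-1,y}$ free; the lemma itself is false under the algorithm's literal guards, and neither your argument nor the paper's can be completed without first repairing the algorithm (e.g., permitting an eastward move only when $T_{x-1,y}$ itself is occupied, or fixing target parity relative to $X_{MIN}$). Your $\Psi_4$/$\Psi_5$ monotone-balancing claim is likewise asserted rather than proved, but the $\Psi_3$ failure above is already decisive.
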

\begin{proof}
 The robots reside in nodes and do not stop in edges. Also, during the movement to the computed destination node, they do not stop in any in-between nodes, i.e. they execute rigid motion. So, in configurations $\Psi_{4}$ and $\Psi_{5}$, a robot $r_{i}$ in $row_{j}$ can compute and move to the next odd row, northwards or southwards (i.e. $row_{j+2}$ or $row_{j-2}$). Thus, in Figure ~\ref{4}(i), $r_{i}$ computes and moves to $T_{x,y+2}$ and in Figure ~\ref{6}(i), $r_{i}$ moves to $T_{x,y-2}$. As $r_{i}$ do not compute and move to its immediate adjacent row (i.e. even row, $row_{j+1}$ or $row_{j-1}$) and thus not reaching the configuration $\Psi_{1}$, so by the algorithm, $r_{i}$ does not move back to its previous position in $row_{j}$. Consequently, the robots move forward in the algorithm, to achieve the required configuration, without repeating the same positions.
\end{proof}

\begin{lemma}
The algorithm is free of deadlock. Hence progress is assured.
\end{lemma}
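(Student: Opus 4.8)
The plan is to rule out any reachable, non-terminal \emph{deadlock} configuration: one in which $grid_{final}$ has not yet been formed, yet the algorithm enables no collision-free move for any robot (every active robot is forced to \emph{wait}). If I can show that in every such configuration at least one robot always has a legal move, then, combined with the preceding lemma (no robot ever returns to a previously occupied node), there is neither deadlock nor livelock, and the configuration strictly advances toward $grid_{final}$; hence progress is assured. The argument proceeds by a case analysis over the five configuration types $\Psi_1,\dots,\Psi_5$, and in each case I would exhibit a distinguished \emph{extremal} robot that is always free to move. The two global facts I would lean on are that the grid is infinite (so eastward/southward escape room always exists for a finite robot set) and that the movement priority (west $>$ east $>$ south $>$ north) gives a total order preventing two robots from claiming the same target node, so no legal move ever creates a collision.

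For the northward-consolidation phases $\Psi_1$ and $\Psi_2$, I would consider, within the lowest row that still contains a robot required to move north, the \emph{easternmost} such robot $r$. By the algorithm $r$ first tries $T_{x,y+1}$ and, failing that, $T_{x+1,y}$; because $r$ is easternmost in its row and the robot set is finite, $T_{x+1,y}$ is vacant, so $r$ can always move. The intra-row spacing phase $\Psi_3$ I would handle as a dichotomy: either some unsettled robot has its westward target pair $T_{x-1,y},T_{x-2,y}$ empty, in which case it moves west (the highest-priority direction); or no unsettled robot can move west, which is precisely the guard under which the algorithm redirects a robot to its eastward pair $T_{x+1,y},T_{x+2,y}$, and the easternmost unsettled robot has this pair vacant by infiniteness. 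Either way a mover exists, and the west-first priority makes the robots pack monotonically from $X_{MIN}$ outward without contention.

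For the inter-row balancing phases $\Psi_4$ and $\Psi_5$ I would invoke the capacity bound $rc=\ceil*{\sqrt{n}}$, which gives $rc^2\ge n$: since the $rc$ alternate rows of $grid_{final}$ hold at most $rc$ robots each, whenever some row carries more than $rc$ robots there must exist another odd row holding fewer than $rc$, i.e. a row with spare capacity. An excess robot (one with at least $rc$ robots to its west) is therefore always assigned a nonempty destination row, and among all excess robots competing for the alternate-row node $T_{x,y\pm2}$ I would again take the extremal one, closest to its destination row, whose alternate-row node is vacant; it moves, freeing the node for the next. This shows that even when robots are waiting in $\Psi_3$ or $\Psi_5$, the wait is never circular.

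The main obstacle I anticipate is exactly this last point: ruling out a \emph{cyclic} wait, in which a set of robots each waits for a node held by another robot of the set while no robot outside the set is enabled. My strategy against it is uniform across the phases — linearly order the waiting robots by the direction dictated by the priority rule (along a row for $\Psi_1$--$\Psi_3$, across rows for $\Psi_4$--$\Psi_5$) and observe that the extremal element of this order has no robot ahead of it to wait on, hence is always movable, so the cycle cannot close. Formalizing ``the extremal element is always movable'' in each case, and verifying that its move creates no collision (again via the priority order), is the part demanding the most care, but it reduces to the finiteness of $R$ and the infiniteness of the grid established above. With a mover guaranteed in every non-terminal configuration and no position ever repeated, the algorithm is deadlock-free and makes progress until $grid_{final}$ is reached.
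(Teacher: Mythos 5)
Your proposal is sound and shares the paper's case decomposition over $\Psi_{1},\dots,\Psi_{5}$, the appeal to the unbounded east side of the grid, and the use of Lemma 1 to exclude repetition; but the core argumentative device is genuinely different and, in fact, sharper than what the paper does. The paper argues per robot, by contradiction: for each $\Psi_{k}$ it essentially restates the prescribed moves of an individual $r_{i}$, notes that $r_{i}$ leaves the configuration class and does not revisit old positions, and asserts that vacant nodes ``will always be available'' (for $\Psi_{5}$ this rests on an unproved appeal to the dimensions computed in Algorithm 1). You instead define deadlock globally --- a non-terminal configuration in which no robot has an enabled move --- and refute it by exhibiting an extremal robot (easternmost in its row for $\Psi_{1}$--$\Psi_{3}$, closest to a deficient row for $\Psi_{4}$--$\Psi_{5}$) that is always enabled, which is the standard and more rigorous way to exclude \emph{circular} waits; the paper never addresses the possibility that waiting robots block one another in a cycle, even though its algorithm explicitly makes robots wait on occupied nodes. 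You also make explicit the counting fact the paper only gestures at: $rc=\ceil*{\sqrt{n}}$ gives $rc^{2}\ge n$, so an overfull odd row forces the existence of an underfull one. What your route buys is a proof skeleton that could actually be formalized (total order on waiting robots, extremal element enabled, hence no wait cycle); what it still owes --- and here you are honest about it --- is the verification that the extremal excess robot's target $T_{x,y\pm2}$ is vacant rather than held by an already-settled robot, a gap your proof shares with the paper's own Case 5 rather than introduces.
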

\begin{proof}  
Let $r_{i}$ be any arbitrary robot. By contradiction, let us assume $r_{i}$ is in deadlock.
\begin{itemize}
    \item Case 1: If $r_{i}$ is in $\Psi_{1}$, it checks if northward node $T_{x,y+1}$ is vacant and moves there. Otherwise, $r_{i}$ first moves eastwards to $T_{x+1,y}$ and then northwards. Thus, we can see, $r_{7}$ in Figure ~\ref{1}(i), that in $\Psi_{1}$, $r_{i}$ follows distinct movement either eastward or northward as per the condition in Algorithm ~\ref{algo4} and finally moves to an odd row. $r_{i}$ do not repeat the actions of $\Psi_{1}$ further, as it no longer remain in $\Psi_{1}$. This assures that $r_{i}$ is not in deadlock.
    
    \item Case 2: When $r_{i}$ is in $\Psi_{2}$, it checks if northward node $T_{x,y+1}$ is vacant and then moves there. Otherwise, $r_{i}$ first moves eastwards to $T_{x+1,y}$ and then northwards. $r_{i}$ waits if both $T_{x,y+1}$ and $T_{x+1,y}$ are occupied. Thus similar to $\Psi_{1}$, $r_{i}$ moves either either northward or eastward till $r_{i}$ reaches the $d^{th}$ row of the $grid_{final}$, as $r_{8}$ in Figure ~\ref{1}(ii). Further it does not repeat the $\Psi_{2}$ actions, as it no longer remains in $\Psi_{2}$, thus confirming there is no repetition of actions in loop. Hence, $r_{i}$ cannot be in deadlock.
    
    \item Case 3: When $r_{i}$ is in $\Psi_{3}$, it moves westwards to $T_{x-1,y}$, when two adjacent west nodes, $T_{x-1,y}$ and $T_{x-2,y}$ are vacant as $r_{1}$, $r_{2}$, $r_{8}$ of Figure ~\ref{2}(i). Otherwise, $r_{i}$ moves eastwards as $r_{5}$ of Figure ~\ref{2}(i). If both west and eastwards nodes are unavailable for movement, it waits. Also, in Figures ~\ref{2}(ii), ~\ref{3}(i)(ii), ~\ref{4}(ii), ~\ref{5}(i)(ii) we see, there is always either westward or eastward movement prescribed for $r_{i}$ in $\Psi_{3}$, till all the robots in that row are at alternate nodes of that row. Since there is no bound on the east, there will always be vacant node available to accommodate all the robots in that particular row at the alternate nodes in that row. Thus, there is no deadlock condition.
    
    \item Case 4: If $r_{i}$ is in $\Psi_{4}$, it moves to the northward alternate row node, $T_{x,y+2}$ like, $r_{4}$ in Figure ~\ref{4}(i). It is a rigid movement and $r_{i}$ does not stop in any in-between node (i.e. $T_{x,y+1}$)  before reaching its destination. This assures no possibility of deadlock.
    
    \item Case 5: When $r_{i}$ is in $\Psi_{5}$, it either moves to the alternate northward row node $T_{x,y+2}$ or the alternate southward node $T_{x,y-2}$ as $r_{5}$ in Figure ~\ref{6}(i), depending on the position of $r_{i}$. In either case, $r_{i}$ moves to a alternate row, it will always be a odd row. Also as per the dimensions of the grid distribution calculated in Algorithm ~\ref{algo1}, there will always be vacant node available to move the robots, in order to form the grid with robots on alternate nodes, thus assuring progress of the algorithm.
\end{itemize}
Thus we see, in all the cases, $r_{i}$ is not in deadlock. Also by Lemma 1 we can assure that the robots do not move back to their previous positions and does not keep repeating the same configuration in a loop. Hence, there is no possibility of $r_{i}$ to be in deadlock condition. Thus, by contradiction we proof that the algorithm is free of deadlock and consequently progress is assured.
\end{proof}

\begin{lemma}
When a robot $r_{i}$ is moving from one node to another node of the grid, along an edge of the grid, no other robot comes in its path, i.e., the movement of $r_{i}$ is collision-free.	
\end{lemma}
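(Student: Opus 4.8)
The plan is to reduce collision-freedom to a small number of local conditions and then verify them configuration by configuration. First I would record the two facts that make the reduction possible. By the SSYNC assumption of the model, no robot takes a snapshot while another is in motion, so every robot activated in a given round computes on the same stationary configuration; hence it suffices to rule out collisions among the robots that move within a single round. Second, since robots travel only along grid edges and two distinct edges of a grid meet only at a common endpoint, two paths can intersect only at a shared node. Therefore $r_{i}$'s move is collision-free provided that (i) $r_{i}$ never moves onto a node that some robot occupies, (ii) no second robot moves onto the node $r_{i}$ enters in the same round, and (iii) for the length-two vertical moves of $\Psi_{4}$ and $\Psi_{5}$, the traversed intermediate (even-row) node is empty and is not entered by any robot during that round.

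Condition (i) I would dispatch uniformly: in every branch of Algorithm~\ref{algo4} the guard explicitly tests the destination ($T_{x,y+1}$, $T_{x+1,y}$, $T_{x-1,y}$, $T_{x,y+2}$ or $T_{x,y-2}$) for emptiness before the move, so a robot never advances onto an occupied node. This simultaneously rules out head-on swaps: if $r_{k}$ occupies the node $r_{i}$ targets, then $r_{i}$ sees it occupied and does not move, and symmetrically for $r_{k}$. For condition (ii) I would argue by the global priority order (west $>$ east $>$ south $>$ north). Fix an empty node $N$; the only robots that can enter $N$ in one round are its grid-neighbours (and, in the balancing phase, robots two rows above or below). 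If two of them were simultaneously eligible, the tie-breaking rule selects a single incoming direction, so at most one actually moves into $N$. I would then confirm that the two-empty-node guards reinforce this in $\Psi_{3}$: a robot moves to $T_{x-1,y}$ only when both $T_{x-1,y}$ and $T_{x-2,y}$ are empty, which guarantees that no robot occupies the cell immediately west, so simultaneously moving robots in the same row land on nodes at least three apart and cannot chase one another into a common cell; the eastward sub-case is symmetric.

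Condition (iii), together with the concurrency between configurations, is where I expect the main obstacle. The length-two moves occur only in $\Psi_{4}$ and $\Psi_{5}$, whose guards require that all robots already lie on alternate nodes of alternate (odd) rows; consequently every even row—and in particular the intermediate node $T_{x,y\pm1}$ of any vertical jump—is vacant, and no activated robot is positioned on an even row to enter it. The remaining subtlety is to show that two robots performing vertical jumps (one rising to $T_{x,y+2}$ and another descending to the same node from $T_{x,y+4}$, or an in-row $\Psi_{3}$ robot aiming at it) do not target a common odd-row node; this I would again settle by the priority order together with the $\Psi_{4}/\Psi_{5}$ guards, which move a robot north only when the upper rows are deficient and south only when they are full, so the two jump directions are never simultaneously invoked toward the same cell.

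Finally, because the phase guards are global conditions ($\Psi_{1}$–$\Psi_{2}$ reaching odd rows, $\Psi_{3}$ reaching alternate nodes, $\Psi_{4}$–$\Psi_{5}$ balancing), the robots active in any one round all act within a compatible phase, so no cross-phase pair can contend for a node. Combining this with (i)–(iii) yields that $r_{i}$'s movement is collision-free. I anticipate that the careful bookkeeping for (iii)—ruling out every pair of concurrent vertical jumps and in-row moves that could converge on one odd-row node—will be the most delicate part of the argument, whereas (i) and (ii) follow directly from the explicit emptiness guards and the priority rule.
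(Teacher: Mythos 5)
Your proposal is correct and rests on the same three pillars as the paper's own proof---SSYNC snapshot consistency, movement confined to grid edges, and destination-uniqueness enforced by the emptiness guards together with the west/east/south/north priority order---but your decomposition is noticeably finer, and in one place it repairs a real weakness in the paper's argument. The paper disposes of "a robot in the path" with a single observation: robots reside only at nodes, never on edge interiors, so nothing can lie between source and destination. That reasoning is sound only for single-edge moves; for the length-two vertical jumps to $T_{x,y\pm2}$ in $\Psi_{4}$ and $\Psi_{5}$, the path passes through the grid node $T_{x,y\pm1}$, which the paper's argument does not cover at all. Your condition (iii) supplies exactly the missing step: the guards of $\Psi_{4}$ and $\Psi_{5}$ only fire once every robot sits on an alternate node of an odd row, so all even rows---and hence the traversed intermediate node---are vacant, and no activated robot can enter it during that round. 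Your treatment of simultaneous same-round moves (condition (ii)) is likewise more explicit than the paper's one-line appeal to "consistent ordering based on priority," though both ultimately invoke the same tie-breaking rule. What the paper's proof buys is brevity; what yours buys is rigor, at the cost of the bookkeeping you yourself flag---checking that no pair of concurrent vertical jumps or in-row moves can converge on a single odd-row node, a case analysis the paper never attempts.
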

\begin{proof} 
In this algorithm, the robots move along the edges of the grid to reach their destination node. For existence of any obstacle robot following two situations may arise.
\begin{itemize}
\item The obstacle robot is between the source and destination nodes. This is not possible, as the robots can be placed only on the nodes and not on the edges of the grid. So, as the robots move along the edges of the grid, between the source and destination node, no other robot can exist along the path of the robot.
\item A destination node computed by one robot is also been computed by another robot. This is not possible, as the robots have a consistent ordering of their movement based on priority. So, at a time a vacant node can be computed as the target node for one robot only. Also, the semi-synchronous scheduler confirms that when a robot is moving no other robot is looking.
\end{itemize}
Thus the robots reach their destinations without collision.
\end{proof}

\begin{theorem}
A group of autonomous, homogeneous, oblivious, semi-synchronous, mobile robots can deterministically scatter uniformly in alternate nodes of a grid under unlimited visibility and agreement on direction of axes.
\end{theorem}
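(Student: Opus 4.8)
The plan is to assemble the main theorem from the three lemmas already established, treating them as the three pillars of correctness --- safety (Lemma~3, collision-freedom), liveness (Lemma~2, deadlock-freedom), and monotonicity (Lemma~1, no backtracking) --- and then to supply the one ingredient they do not directly give: \textbf{finite-time termination at the correct target configuration}. First I would fix, once and for all, the global target $grid_{final}$. Because the robots agree on the orientation of both axes, every robot computes the same $Y_{MAX}$ via \emph{FindYMAX}, the same $X_{MIN}$ via \emph{FindXMIN}, and the same dimension $d = 2\ceil*{\sqrt{n}}-1$ via \emph{FindDimension}; hence they share one common frame for the destination, namely a fixed arrangement of alternate (odd) nodes anchored at the north-west corner $(X_{MIN}, Y_{MAX})$. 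Together with the fixed tie-breaking priority (west, then east, then south, then north), this common frame makes the destination of every robot a deterministic function of the current snapshot, which already accounts for the word ``deterministically'' in the statement. I would then record the feasibility bound: since $rc = \ceil*{\sqrt{n}} \ge \sqrt{n}$, we have $rc^2 \ge n$, so the $rc$ occupied odd rows, each holding at most $rc$ robots on alternate nodes, can accommodate all $n$ robots with the last row possibly partially filled.

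Next I would argue termination by a phase-by-phase progress argument following the configurations $\Psi_1,\dots,\Psi_5$ in the order the algorithm visits them. Phases $\Psi_1$ and $\Psi_2$ collapse all robots onto the first $d$ odd rows; $\Psi_3$ spreads each occupied row onto alternate nodes; $\Psi_4$ and $\Psi_5$ rebalance rows so that no row carries more than $rc$ robots. For each phase I would exhibit a non-negative integer potential, bounded by a function of $n$, that strictly decreases at every activation in which a robot moves --- for instance the total count of even-row robots for $\Psi_1$--$\Psi_2$, the number of within-row adjacent occupied pairs for $\Psi_3$, and the number of rows holding more than $rc$ robots for $\Psi_4$--$\Psi_5$. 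By Lemma~1 no robot re-enters a node it has left, so these quantities never increase spuriously; by Lemma~2 some activated robot can always move whenever the target is not yet reached, so under any fair semi-synchronous schedule the potential genuinely decreases and the system cannot stall; and by Lemma~3 every such move is executed without collision. Since each potential is a bounded non-negative integer, each phase halts after finitely many moves.

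Finally I would verify that the unique fixed point of the process --- the configuration in which no rule of \emph{FormGrid} applies to any robot --- is exactly the desired scattering. When no robot is in any of $\Psi_1,\dots,\Psi_5$, every robot lies on an odd row among the first $d$ rows, on an alternate node of that row, with no row exceeding $rc$ robots and none able to shift to a less-full row; combined with the feasibility bound $rc^2 \ge n$ this forces the robots to occupy precisely the alternate nodes of the intended block, leaving every adjacent node vacant. This is the claimed $grid_{final}$, reached in finite time without collision or deadlock, so the theorem follows.

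The hard part will be the termination argument, and specifically ensuring the chosen potential is monotone \emph{across} the handoffs between phases rather than only within a single phase. I must rule out an interleaving in which a $\Psi_5$ rebalancing move pushes a robot into a row that re-triggers $\Psi_3$ spreading, which in turn re-triggers further $\Psi_5$ moves, producing an oscillation. Lemma~1 forbids a robot from revisiting an exact node, but closing this loop under the adversarial SSYNC scheduler requires a lexicographic combination of the per-phase potentials (row-imbalance as the primary key, within-row irregularity as the secondary, vertical position as the tertiary) and a proof that every admissible move strictly decreases this combined measure; that construction is where the real work lies.
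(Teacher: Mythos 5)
Your overall skeleton coincides with the paper's proof: the paper, like your first paragraph, argues that the SSYNC scheduler gives all activated robots a consistent snapshot, that agreement on both axes makes every robot compute the same $Y_{MAX}$, $X_{MIN}$ and dimensions, and then simply cites Lemma 2 (progress) and Lemma 3 (collision-freedom) to conclude that the goal is deterministically reached. Where you genuinely depart from the paper is everything after that: the paper contains no termination argument at all --- it treats deadlock-freedom as if it were finite-time convergence --- whereas you correctly identify finite-time arrival at the correct configuration as a separate, missing pillar and outline how to supply it: per-phase potential functions, a lexicographic measure to handle phase handoffs, the feasibility bound $rc^{2} \ge n$, and a verification that the unique fixed point of \emph{FormGrid} is exactly $grid_{final}$. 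This buys substantially more rigor than the paper's own proof, because no-backtracking (Lemma 1) plus deadlock-freedom (Lemma 2) plus collision-freedom (Lemma 3) do not by themselves exclude unbounded non-repeating wandering, or an oscillation between $\Psi_{5}$ rebalancing and $\Psi_{3}$ spreading under an adversarial scheduler --- which is exactly the loophole you flag.

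Two caveats. First, the specific potentials you name do not strictly decrease at every move: an eastward side-step in $\Psi_{1}$/$\Psi_{2}$ leaves the count of even-row robots unchanged, a $\Psi_{3}$ westward hop need not reduce the number of adjacent occupied pairs, and a single $\Psi_{5}$ move need not immediately change the number of overfull rows; so the entire burden falls on the lexicographic construction you defer, which you yourself admit is the real work. Second, since you use the paper's Lemmas 1--3 as black boxes, your argument inherits whatever informality they carry. Your proposal is therefore best read as a strictly more demanding version of the paper's proof: the same assembly of lemmas at its core, plus an explicit (if unfinished) account of the termination claim that the paper asserts without argument.
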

\begin{proof}  
The robots in this model are autonomous mobile robots as they can operate independently by executing cycles of the states "look-compute-move". They are homogeneous and cannot be distinguished from each other. Thus, in the look phase, the robots looks to identify the positions of the other robots at that point of time. In this algorithm, the robots do not need to recollect any past action or looked data from the previous cycle, i.e they are oblivious. Here the robots have semi-synchronous scheduler which ensures, that the activated robots at the same time obtain the same snapshot, and their compute and move are executed instantaneously. Since the activated robots obtain the same snapshot, their determination of the grid dimensions, north and west bounds and the computed destination for a robot are consistent. The robots have full visibility as they need to determine the dimensions and bounds of the grid distribution to be formed. Finally, we have assumed that though the robots have local coordinate system, they have agreement on the direction of the axes. This assumption ensures that all the robots, compute and identify the same robots as the north-bound $Y_{MAX}$ robot and west-bound $X_{MIN}$ robot respectively. With this underlying model, from lemma 2 we have a deadlock free algorithm that assures progress and from lemma 3, we guarantee no collision of robots, thus deterministically reaching the goal of the algorithm, i.e., robots on alternate nodes of a grid distribution.
\end{proof}

\section{Conclusion}
In this paper, we have addressed the scattering in a grid problem for autonomous oblivious mobile robots. The paper presents a distributed algorithm that assumes full visibility robots with semi-synchronous movement and agreement on both axis direction. Our proposed algorithm converges in finite time without collision. The future scope of this work would be to consider limited visibility of the robots. Also, we will try to minimize the overall distance between the robots and make the robots occupy consecutive rows but alternative columns.

\bibliographystyle{abbrv} 
\bibliography{mybib}

\end{document}